\documentclass{article}
\usepackage{spconf}
\usepackage{hyperref}
\usepackage{xcolor}
\usepackage{amsthm,amsmath,amsfonts}
\usepackage{amssymb}
\usepackage{mathtools}
\usepackage{bm}
\usepackage{titlesec} 
\usepackage{cancel}
\usepackage{empheq}
\usepackage{algorithm}
\usepackage{algorithmic}

\usepackage[caption=false,font=footnotesize]{subfig}

\newcommand{\gradz}{\nabla_{\bm{\theta}}}

\theoremstyle{plain}
\newtheorem{theorem}{Theorem}[section]

\theoremstyle{definition}

\title{Online Gradient Computation for Warping Gaussian Process Transformations}
\name{
Emilio Ruiz-Moreno\textsuperscript{1,2}, Konstantinos Slavakis\textsuperscript{3}, and Baltasar Beferull-Lozano\textsuperscript{1,2}
\thanks{\textsuperscript{1}SIGIPRO Department, Simula Metropolitan Center for Digital Engineering, Oslo, Norway; \textsuperscript{2}SURE-AI Center, Simula Research Laboratory, Oslo, Norway; \textsuperscript{3}Institute of Science Tokyo, Yokohama, Japan. 
This work was supported by the DISCO grant 338740, the DRIVE grant 360486, and the SURE-AI Center grant 357482 from the Research Council of Norway.
\\ \textbf{code}: \href{https://github.com/SIGIPRO/onlineWGP}{https://github.com/SIGIPRO/onlineWGP} }
}
\address{ }

\begin{document}
\maketitle
\begin{abstract}
Warped Gaussian processes (GPs) handle non-Gaussian observations by mapping them into a latent standard GP via a parametric transformation called warping. 
Existing streaming variants, however, either optimize the warping parameters periodically or sacrifice analytical tractability for a higher model capacity.
To bridge this gap, we show that the gradient of the instantaneous negative log-likelihood of a warped GP admits an exact recursive computation.
Based on this result, we propose a novel online method for warped GPs that jointly updates the latent GP moments and optimizes the warping parameters.
\end{abstract}

\begin{keywords}
    Gaussian process, warping transformation, online learning
\end{keywords}

\vspace{-8pt}
\section{Introduction}
\vspace{-4pt}

Gaussian processes (GPs) provide a non-parametric Bayesian framework for probabilistic function approximation \cite{rasmussen2003gaussian}, proving especially useful when the true underlying function is analytically unknown or expensive to query \cite{swiler2020survey,ruiz2025doubly}.
By placing a prior directly over the space of functions of interest, GPs offer a principled approach to uncertainty quantification, making them particularly attractive for applications where predictive confidence is as critical as point accuracy; for instance, in robotics \cite{deisenroth2011pilco} or geostatistics \cite{diggle1998model,dewey2026deep}. 
Crucially, the mathematical tractability of GPs allows for exact Bayesian inference, yielding closed-form predictive distributions when paired with Gaussian function observation likelihoods.

Despite their widespread success, standard GPs are fundamentally limited by their core assumption that the function observations are adequately modeled by a joint Gaussian distribution. 
Indeed, this assumption often breaks down in practice, as real-world observations may exhibit heavy tails, distinct skewness, or physical constraints (such as strict positivity). Consequently, applying standard GPs to such non-Gaussian observations can produce highly inaccurate predictions and poorly calibrated uncertainty bounds.

Motivated by these limitations, warped GPs \cite{snelson2003warped} extend standard GPs by applying a parametric transformation—termed warping—to non-Gaussian observations, mapping them into latent targets where standard GP assumptions hold. 
By jointly estimating latent GP moments and warping parameters, warped GPs accommodate complex observation likelihoods while retaining the analytical tractability of standard GPs.

However, the implementation of warped GPs in memory-constrained or real-time tasks remains a significant challenge. 
This is because, to the best of our knowledge, there is no online method to jointly update the latent GP moments and optimize the warping parameters of warped GPs.
The methods arguably closest to this work in the literature either optimize the warping parameters periodically \cite{kou2013sparse} or trade analytical tractability for higher model capacity \cite{bui2016deep}.

In this paper, we show that the gradient of the negative log-likelihood (NLL) of a warped GP can be computed recursively in a exact manner. 
Based on this, we propose a novel online method for warped GPs that jointly updates the latent GP moments and optimizes the warping parameters.

\vspace{-8pt}
\section{Background}
\label{sec:background}
\vspace{-4pt}

\begin{figure}[t]
    \centering
    \includegraphics[width=\linewidth]{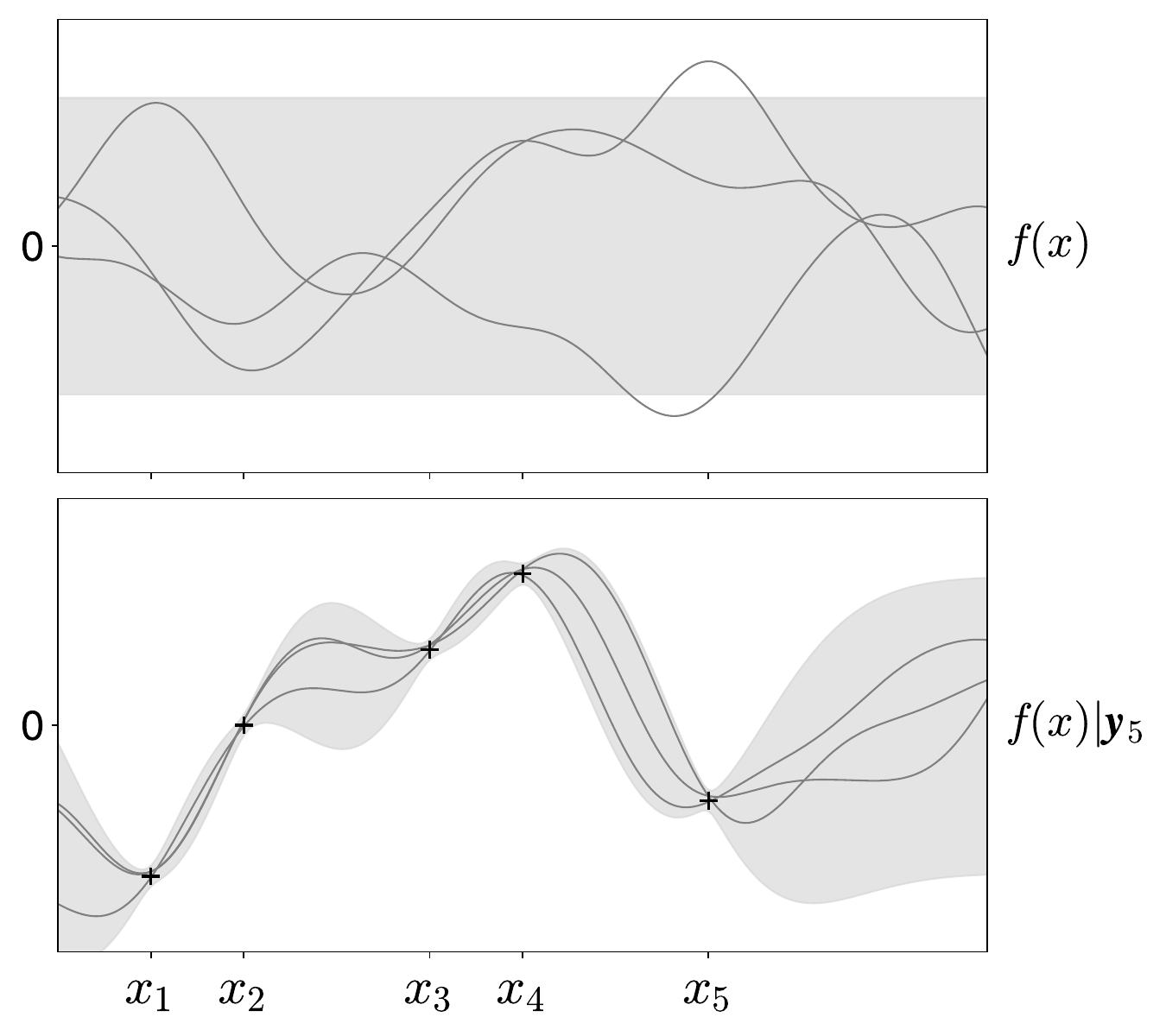}
    \caption{GP prior (top) and GP posterior (bottom), i.e., the GP prior conditioned on observations (indicated by + markers). The solid lines represent sample functions drawn from the GP. The shaded area covers $\pm 1.96$ standard deviations from the mean. Example adapted from \cite[Ch. 2]{rasmussen2003gaussian}.}
    \label{fig:GP_prior-posterior}
\end{figure}

Consider the problem of inferring an unknown, real-valued function $f$ over an arbitrary $d$-dimensional input space $\mathcal{X}\subseteq\mathbb{R}^d$, based on prior beliefs about its function space and a set of $n$ (possibly corrupted) function observations $y_1,\dots,y_n$ at corresponding input locations $\bm{x}_1,\dots,\bm{x}_n$.

Such a problem can be addressed analytically, provided certain assumptions hold, as we detail next.

\vspace{-4pt}
\subsection{Gaussian processes}
\label{ssec:GPs}

A GP model typically assumes the function $f$ is a realization of a zero-mean GP prior and the function observations are corrupted by white Gaussian noise.
That is, 
\begin{subequations}
\label{eq:gp_model}
\begin{align}
    f(\bm{x}) &\sim \mathcal{GP} \left( 0 , \kappa(\bm{x},\bm{x}') \right) , \label{seq:GP_prior} \\
    y_i &= f(\bm{x}_i) + \epsilon_i , \label{seq:measurement_model} 
\end{align}
\end{subequations}
where the kernel $\kappa:\mathcal{X}\times\mathcal{X}\to\mathbb{R}$ is the covariance function of the GP prior, and each $\epsilon_i \sim \mathcal{N}(0,\sigma^2)$ is an observation noise term with standard deviation $\sigma$.

Thanks to Gaussianity and the closure properties of GPs~\cite{rasmussen2003gaussian}, the posterior distribution over functions conditioned on a set of $n$ observations collected in the observation vector $\bm{y}_n = [y_1,\dots,y_n]^\top\in\mathbb{R}^n$ is also Gaussian.
Specifically,
\begin{equation}
\label{eq:conditional_GP}
    f(\bm{x})|\bm{y}_n \sim \mathcal{N} \left( m_n(\bm{x}) , v_n(\bm{x}) \right) ,
\end{equation}
with closed-form posterior mean and variance
\begin{subequations}
\label{eq:GP_moments}
\begin{align}
    m_n(\bm{x}) &= \bm{k}_n(\bm{x})^\top \bm{\alpha}_n , \\
    v_n(\bm{x}) &= \kappa(\bm{x},\bm{x}) - \bm{k}_n(\bm{x})^\top \bm{\Omega}_n  \bm{k}_n(\bm{x}) ,
\end{align}
\end{subequations}
respectively, where 
\begin{subequations}
\label{eq:GP_posterior_terms}
\begin{align}
    \bm{\alpha}_n &= \bm{\Omega}_n \bm{y}_n \in \mathbb{R}^n , \label{seq:alpha_vector} \\
    \bm{\Omega}_n &= \left( \bm{K}_n + \sigma^2\bm{I}_n \right)^{-1} \in \mathbb{S}^n_{\geq 0} , \label{seq:Omega_matrix} ,
\end{align}
\end{subequations}
with $\bm{k}_n(\bm{x})\in\mathbb{R}^n$ and $\bm{K}_n \in \mathbb{S}^n_{\geq 0}$ constructed as $[\bm{k}_n(\bm{x})]_i = \kappa(\bm{x}_i,\bm{x})$ and $[\bm{K}_n]_{i,j} = \kappa(\bm{x}_i,\bm{x}_j)$, for all $i,j \in \{1,\dots,n\}$.

To aid conceptual understanding, an illustration of the prior \eqref{seq:GP_prior} and posterior \eqref{eq:conditional_GP} of a GP is provided in Fig. \ref{fig:GP_prior-posterior}.

\subsubsection{Recursive Gaussian process updates}
\label{sssec:recursive}
Direct computation of the terms $\bm{\alpha}_n$ and $\bm{\Omega}_n$ for the posterior moments in \eqref{eq:GP_moments} scales poorly with the number of observations $n$, mostly due to the matrix inversion on the right-hand side of \eqref{seq:Omega_matrix}.
Fortunately, one can still compute these terms exactly, but at a reduced computational load, by exploiting the following recursive structure
\begin{subequations}
\begin{align}
    \bm{\alpha}_n &= \begin{bmatrix} \bm{\alpha}_{n-1} \\ 0 \end{bmatrix} - \frac{y_n - m_{n-1}(\bm{x}_n)}{s_n} \begin{bmatrix} \bm{\omega}_n \\ -1 \end{bmatrix} , \\
    \bm{\Omega}_n &= \begin{bmatrix} \bm{\Omega}_{n-1} & \bm{0}_{n-1} \\ \bm{0}_{n-1}^\top & 0 \end{bmatrix} + \frac{1}{s_n} \begin{bmatrix} \bm{\omega}_n \\ -1 \end{bmatrix} \begin{bmatrix} \bm{\omega}_n^\top & -1 \end{bmatrix} , \label{seq:recursive_Omega}
\end{align}
\end{subequations}
where $s_n = v_{n-1}(\bm{x}_n) + \sigma^2 \in \mathbb{R}$, and $\bm{\omega}_n = \bm{\Omega}_{n-1}\bm{k}_{n-1}(\bm{x}_n) \in \mathbb{R}^{n-1}$.
Thus, the GP posterior \eqref{eq:conditional_GP} can be updated recursively as each new observation $y_n$ and associated input location $\bm{x}_n$ become available, simply by maintaining $\bm{\alpha}_n$ and $\bm{\Omega}_n$ as state variables.

\subsubsection{Sparse Gaussian process updates}
\label{sssec:sparsification}
Indeed, the recursive updates in Sec. \ref{sssec:recursive} effectively avoid the computational complexity bottleneck of an $n\times n$ matrix inversion.
However, computing the posterior moments in \eqref{eq:GP_moments} still requires evaluating an $n \times 1$ vector dot product for the mean and an $n\times n$ quadratic form for the variance.
That is, the computational and memory requirements scale as $\mathcal{O}(n)$ for the mean, and $\mathcal{O}(n^2)$ for the variance.

This unbounded growth with the number of observations $n$, is known in the literature as the ``curse of kernelization'' \cite{wang2012breaking,calandriello2017efficient} and is typically addressed via kernel approximation methods such as Nyström \cite{williams2000using} and random Fourier features \cite{rahimi2007random,money2023sparse}, or via dictionary-based sparsification methods including forgetting mechanisms \cite{slavakis2014online} and the approximate linear dependency (ALD) technique \cite{engel2004kernel,engel2005algorithms}. 

In this work, we rely on the ALD technique, as it naturally integrates with the recursive updates in Sec. \ref{sssec:recursive} (see Sec. \ref{sec:method} for more details).

\vspace{-4pt}
\subsection{Warped Gaussian processes}
\label{ssec:warped_GPs}

Warped GP models assume that observations follow a GP model only after applying a monotonic, parametric, and differentiable transformation referred to as warping \cite{snelson2003warped}.
Specifically, given a warping transformation $g$ of $r$ parameters $ \bm{\theta} \in \Theta \subseteq \mathbb{R}^r$, the observation model in \eqref{seq:measurement_model} becomes
\begin{equation}
\label{eq:latent_target}
    z_i := g(y_i ; \bm{\theta}) = f(\bm{x}_i) + \epsilon_i ,
\end{equation}
where each $z_i\in\mathbb{R}$ serves as a latent target.
Accordingly, the term $\bm{\alpha}_n$, defined in \eqref{seq:alpha_vector}, becomes $\bm{\alpha}_n = \bm{\Omega}_n \bm{z}_n \in \mathbb{R}^n$, with $\bm{z}_n = [z_1,\dots,z_n]^\top\in\mathbb{R}^n$.

Warped GPs can be seen as a generalization of GPs. 
In fact, they are typically non-Gaussian and even asymmetric in the observation space.

By invoking the change of variables formula~\cite[Ch. 2]{casella2024statistical}, the joint probability density $p$ of any $n$ observations $\bm{y}_n$ can be expressed in terms of the corresponding density $q$ of latent targets $\bm{z}_n$, and the warping transformation $g$ and warping parameters $\bm{\theta}$.
Specifically,
\begin{equation}
\label{eq:p_y}
p( \bm{y}_n | \bm{\theta} ) = q( \bm{g}(\bm{y}_n ; \bm{\theta}) ) \prod_{i=1}^n \frac{\partial g(y_i ; \bm{\theta})}{\partial y_i} ,
\end{equation}
where $\bm{g}$ applies the warping transformation $g$ elementwise, and the joint probability of latent targets follows
\begin{equation}
\label{eq:p_z}
    q(\bm{z}) = \left( (2\pi)^n \left| \bm{\Omega}_n^{-1} \right| \right)^{-\frac{1}{2}} \exp \left( -\frac{1}{2} \bm{z}_n^\top \bm{\Omega}_n \bm{z}_n \right) 
\end{equation}
by construction from \eqref{eq:latent_target}.

One of the greatest strengths of warped GPs is that the warping parameters $\bm{\theta}$ can be estimated directly from the observations $\bm{y}_n$ without requiring explicit knowledge of their joint probability density $p$. 
As long as the latent density $q$ and the warping transformation $g$ are defined, \eqref{eq:p_y} provides a tractable likelihood that can be optimized.

\begin{algorithm}[!t]
\caption{Proposed online method for warped GPs.}
\label{alg}
\begin{algorithmic}[1] 
\STATE \textit{\% tilde superscripts distinguish sparsified variables from their exact counterparts.}
\STATE \textbf{Choose} $g$, $\Theta$, $\kappa$, $\{\sigma_n\}$, the ALD sparsification threshold $\nu$, and the projected optimizer $\Pi_{\Theta}$.
\STATE \textbf{Initialize} $\bm{\theta}_0 \in \Theta$, $\tilde{\bm{K}}^{-1}_0 = 1/\kappa(\bm{x}_1,\bm{x}_1)$, $\tilde{\bm{\alpha}}_0 = \tilde{\bm{C}}_0 = 0$, $\tilde{\bm{B}}_0 = \bm{0}_r$, and the dictionary $\mathcal{D}_0 = \{\bm{x}_1\}$.
\FOR{$n=1,2,\dots$}
    \STATE \textbf{Observe} $\bm{x}_n$ and $y_n$
    \STATE \textbf{Compute} $\hat{\bm{a}}_n = \tilde{\bm{K}}^{-1}_{n-1}\tilde{\bm{k}}_{n-1}(\bm{x}_t)$, and
    \STATE $\delta_n = \kappa(\bm{x}_n,\bm{x}_n) - \tilde{\bm{k}}_{n-1}(\bm{x}_n)^\top \hat{\bm{a}}_n$
    \STATE \textbf{Warp} $z_n = g(y_n;\bm{\theta}_{n-1})$ and $\dot{\bm{z}}_n = \nabla_{\bm{\theta}} \, g(y_n;\bm{\theta}_{n-1})$
    \STATE \textbf{Compute} $\tilde{e}_n = z_n - \tilde{\bm{k}}_{n-1}(\bm{x}_n)^\top \tilde{\bm{\alpha}}_{n-1}$, and
    \STATE $\tilde{\bm{b}}_n = \dot{\bm{z}}_n - \tilde{\bm{B}}_{n-1}\tilde{\bm{k}}_{n-1}(\bm{x}_n)$
    \IF{ $\delta_n > \nu$ }
    \STATE \textbf{Update} 
    $ \tilde{\bm{K}}^{-1}_n = \frac{1}{\delta_n} \begin{bmatrix} \delta_n \tilde{\bm{K}}^{-1}_{n-1} + \hat{\bm{a}}_n\hat{\bm{a}}_n^\top & -\hat{\bm{a}}_n \\ -\hat{\bm{a}}_n^\top & 1 \end{bmatrix} \nonumber $
    \STATE \textbf{Compute} 
    $\tilde{\bm{c}}_n = \begin{bmatrix} - \tilde{\bm{C}}_{n-1}\tilde{\bm{k}}_{n-1}(\bm{x}_n) \\ 1 \end{bmatrix}$, and
    \STATE $\tilde{s}_n \! = \! \kappa(\bm{x}_n,\bm{x}_n) \! + \! \sigma^2_n \! - \! \tilde{\bm{k}}_{n-1}(\bm{x}_n)^\top \tilde{\bm{C}}_{n-1}\tilde{\bm{k}}_{n-1}(\bm{x}_n)$
    \STATE \textbf{Reshape} $\tilde{\bm{C}}_{n-1} = \begin{bmatrix} \tilde{\bm{C}}_{n-1} & \bm{0}_{D_n-1} \\ \bm{0}_{D_n-1}^\top & 0 \end{bmatrix}$, 
    \STATE $\tilde{\bm{B}}_{n-1} = [\tilde{\bm{B}}_{n-1},\bm{0}_r]$, and $\tilde{\bm{\alpha}}_{n-1} = \begin{bmatrix} \tilde{\bm{\alpha}}_{n-1} \\ 0 \end{bmatrix}$
    \STATE \textbf{Update} $\mathcal{D}_n = \mathcal{D}_{n-1}\cup\{\bm{x}_n\}$
    \ELSE 
    \STATE \textbf{Update} $\tilde{\bm{K}}_n = \tilde{\bm{K}}_{n-1}$
    \STATE \textbf{Compute} $\tilde{\bm{c}}_n = \hat{\bm{a}}_n - \tilde{\bm{C}}_{n-1}\tilde{\bm{k}}_{n-1}(\bm{x}_n)$, and $\tilde{s}_n =$
    \STATE $\tilde{\bm{k}}_{n-1}(\bm{x}_n)^\top\hat{\bm{a}}_n \! + \! \sigma_n^2 \! - \! \tilde{\bm{k}}_{n-1}(\bm{x}_n)^\top \tilde{\bm{C}}_{n-1}\tilde{\bm{k}}_{n-1}(\bm{x}_n)$
    \STATE \textbf{Update} $\mathcal{D}_n = \mathcal{D}_{n-1}$
    \ENDIF
    \STATE \textbf{Update} $\tilde{\bm{\alpha}}_n \! = \! \tilde{\bm{\alpha}}_{n-1} \! + \! \frac{\tilde{e}_n}{\tilde{s}_n} \tilde{\bm{c}}_n$, $\tilde{\bm{B}}_n \! = \! \tilde{\bm{B}}_{n-1} \! + \! \frac{1}{\tilde{s}_n} \tilde{\bm{b}}_n \tilde{\bm{c}}_n^\top$,
    \STATE and $\tilde{\bm{C}}_n = \tilde{\bm{C}}_{n-1} + \frac{1}{\tilde{s}_n} \tilde{\bm{c}}_n \tilde{\bm{c}}_n^\top$
    \STATE \textbf{Compute} $\nabla_{\bm{\theta}} \tilde{\ell}_n(\bm{\theta}_{n-1}) = \frac{\tilde{e}_n}{\tilde{s}_n}\tilde{\bm{b}}_n - \nabla_{\bm{\theta}} \log ( \frac{\partial z_n}{\partial y_n} )$
    \STATE \textbf{Update} $\bm{\theta}_n = \Pi_\Theta ( \bm{\theta}_{n-1} , \nabla_{\bm{\theta}} \tilde{\ell}_n(\bm{\theta}_{n-1}) )$
\ENDFOR
\STATE \textbf{Return} $\mathcal{D}_n$, $\tilde{\bm{\alpha}}_n$, $\tilde{\bm{C}}_n$, and $\bm{\theta}_n$
\end{algorithmic}
\end{algorithm}
\begin{figure*}[t]
    \centering
    \subfloat[True process, warped GP, and distribution slices (gray dashed line)\label{subfig:process}]{%
        \includegraphics[width=0.63\textwidth]{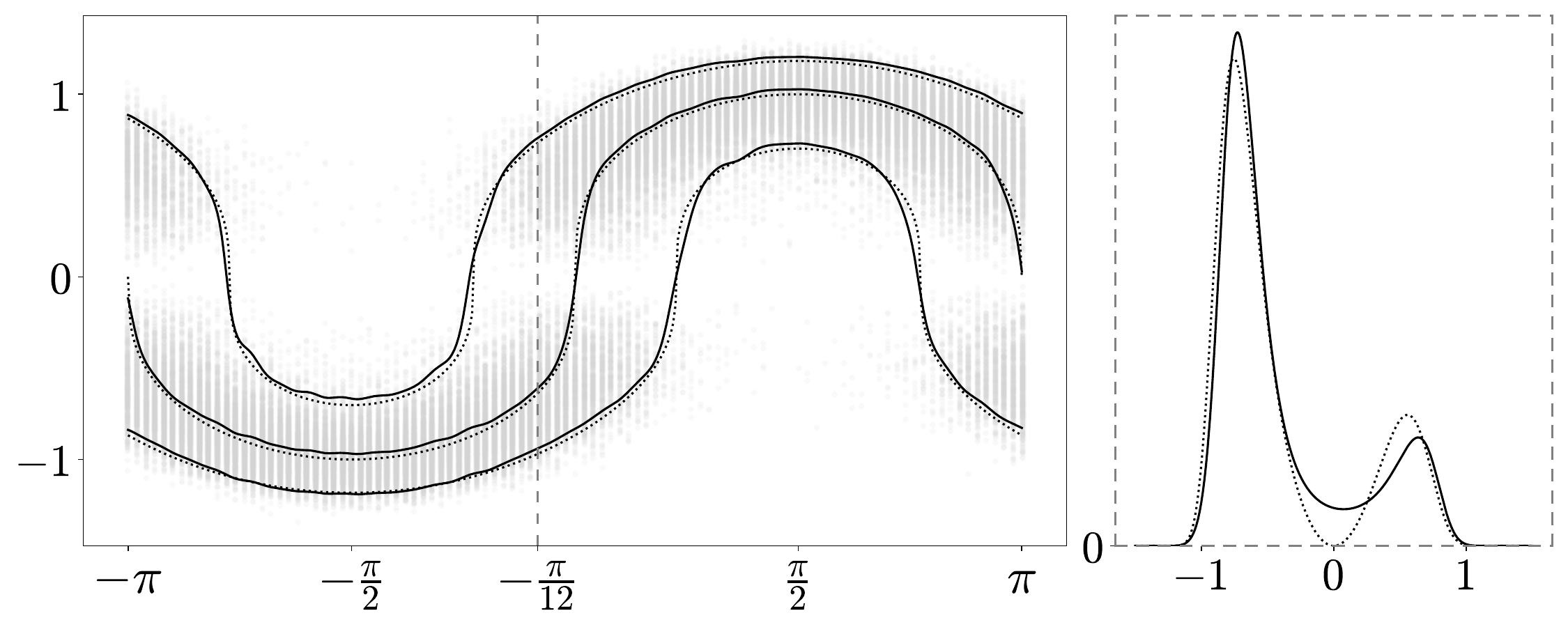}%
    }\hfill
    \subfloat[Forward and inverse warping transformation\label{subfig:warping}]{%
        \includegraphics[width=0.34\textwidth]{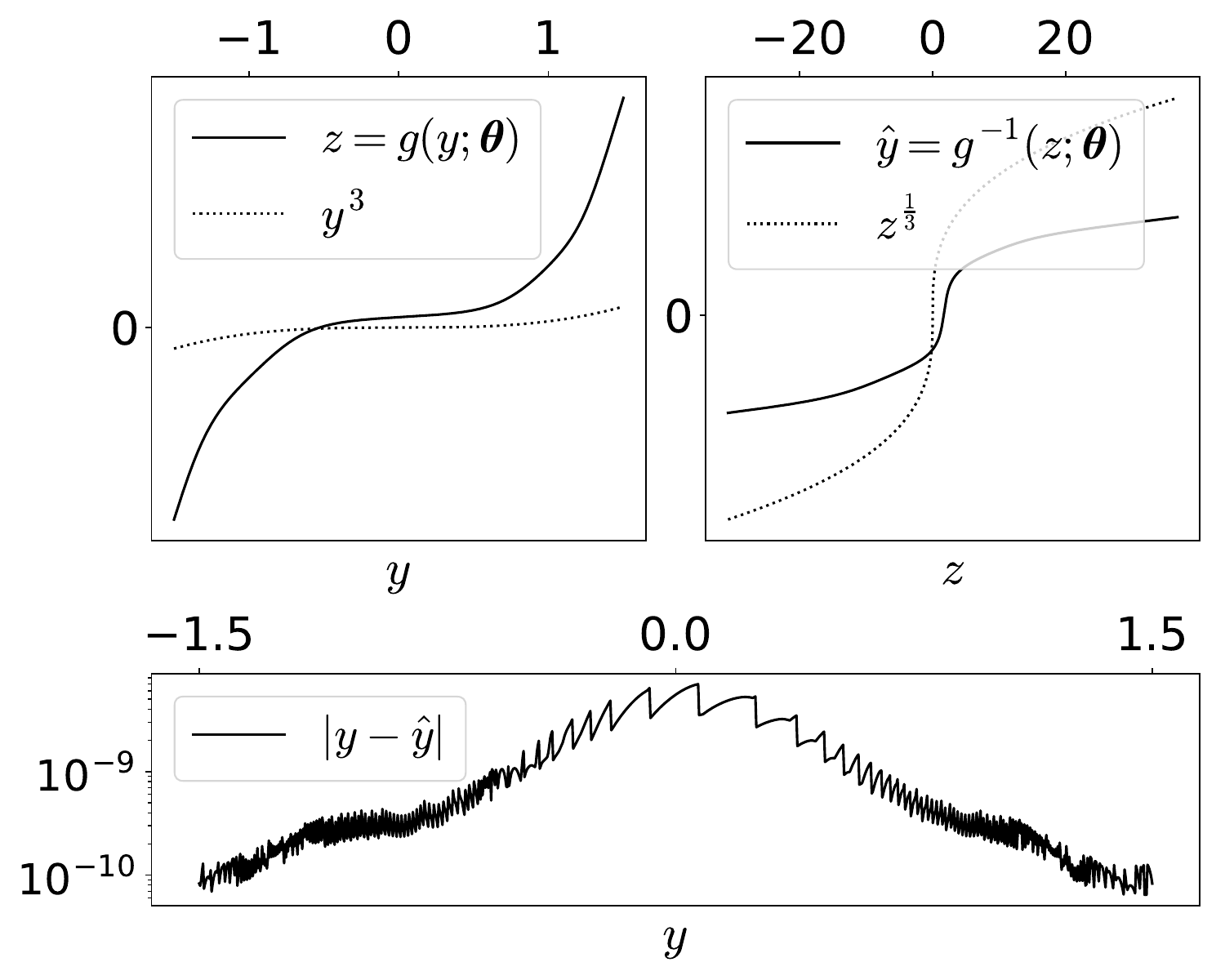}%
    }
    \caption{In Fig. \ref{subfig:process}, the observations are marked with gray dots. The dotted lines show the true generating distribution $p(\bm{y})$ and the solid lines show the warped GP prediction $p(\bm{y}|\bm{\theta})$. The triplets of lines represent the median, along with the $2.5$th and $97.5$th percentiles in each case. The cross-section shows the probability densities at $x=-\frac{\pi}{12}$; that is, $p(y|x=-\frac{\pi}{12}$) and $p(y|\bm{\theta},x=-\frac{\pi}{12})$. Fig. \ref{subfig:warping} shows the forward-inverse pass of the estimated warping transformation as well as its inversion error.}
    \label{fig:experiment}
\end{figure*}

\section{Recursive gradient computation}
\label{sec:recursive_gradient_computation}
\vspace{-4pt}

The $n$-observation joint NLL
\begin{equation}
\label{eq:nll}
    L_n(\bm{\theta}) = - \log p(\bm{y}_n|\bm{\theta}) 
\end{equation}
measures the discrepancy between the observations $\bm{y}_n$ and the joint probability density $p$ (implicitly) described by the warping parameters $\bm{\theta}$.
Using the multiplication rule of probability \cite{ross2014first}, one can readily get that 
\begin{equation}
\label{eq:recursive_nll}
    L_n(\bm{\theta}) = L_{n-1}(\bm{\theta}) + \ell_n(\bm{\theta}) ,
\end{equation}
where the instantaneous NLL 
\begin{equation}
\label{eq:inll}
    \ell_n(\bm{\theta}) = -\log p(y_n | \bm{y}_{n-1},\bm{\theta}) ,
\end{equation}
quantifies how unexpected the last observation $y_n$ was given the previous observations $\bm{y}_{n-1}$.

In many applications, the set of observations $y_1,\dots,y_n$ is too large to fit in memory, or is streamed, rendering standard batch processing intractable.
As a result, estimating the warping parameters by directly minimizing the NLL in \eqref{eq:nll} with respect to $\bm{\theta}$ may not be possible in practice.
On the other hand, by exploiting the additive decomposition of the NLL in \eqref{eq:recursive_nll}, one can naturally transition from a batch to an online estimation framework.
Specifically, the warping parameter estimate can be iteratively refined utilizing the gradient of the instantaneous NLL, e.g., by a first-order method \cite{beck2017first}.

Although evaluating every instantaneous NLL $\ell_n(\bm{\theta})$ depends on the previous observations $\bm{y}_{n-1}$, Theorem \ref{thm:recursive} demonstrates that its gradient can be computed recursively. 

\begin{theorem}[Recursive gradient computation]
\label{thm:recursive}
The gradient of the instantaneous NLL in \eqref{eq:inll} with respect to the warping parameters $\bm{\theta}$ can be computed as 
\begin{equation}
\label{eq:gradient_inll_rule}
    \gradz \ell_n(\bm{\theta}) = \frac{e_n}{s_n}\bm{b}_n - \gradz \log \left( \frac{\partial z_n}{\partial y_n} \right) ,
\end{equation}
where the terms $e_n\in\mathbb{R}$ and $\bm{b}_n \in \mathbb{R}^r$ correspond to
\begin{subequations}
\begin{align}
    e_n &= \bm{z}_n - m_{n-1}(\bm{x}_n) , \text{ and } \\
    \bm{b}_n &= \dot{\bm{z}}_n - \bm{B}_{n-1} \bm{k}_{n-1}(\bm{x}_n) ,    
\end{align} 
\end{subequations}
with $\dot{\bm{z}}_n = \gradz g(y_n;\bm{\theta}) \in \mathbb{R}^r$, and the term $\bm{B}_n \in \mathbb{R}^{r\times n}$ acts as a state variable updated according to
\begin{equation}
    \bm{B}_n = \begin{bmatrix} \bm{B}_{n-1} & \bm{0}_r \end{bmatrix} - \frac{\dot{\bm{z}}_n - \bm{B}_{n-1}\bm{k}_{n-1}(\bm{x}_n)}{s_n} \begin{bmatrix} \bm{\omega}_n^\top & -1 \end{bmatrix} .
\end{equation}
\end{theorem}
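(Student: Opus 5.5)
The plan is to write the instantaneous likelihood in closed form, differentiate it, and then identify $\bm{B}_n$ as a Jacobian that inherits its recursion from the one for $\bm{\alpha}_n$.

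\textbf{Closed form of $\ell_n$.} By \eqref{eq:p_y}, the Jacobian factor in $p(\bm{y}_n|\bm{\theta})$ is a product over $i$. Hence the conditional density $p(y_n|\bm{y}_{n-1},\bm{\theta}) = p(\bm{y}_n|\bm{\theta})/p(\bm{y}_{n-1}|\bm{\theta})$ factors into two pieces. The first is the latent Gaussian conditional $q(z_n|\bm{z}_{n-1})$. The second is the single factor $\partial g(y_n;\bm{\theta})/\partial y_n$. The latent conditional is $\mathcal{N}(m_{n-1}(\bm{x}_n), s_n)$ with $s_n = v_{n-1}(\bm{x}_n)+\sigma^2$; this is the standard GP predictive distribution from \eqref{eq:GP_moments}, with $\bm{y}$ replaced by $\bm{z}$. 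Therefore
\[
\ell_n(\bm{\theta}) = \tfrac12\log(2\pi s_n) + \frac{e_n^2}{2 s_n} - \log\Big(\frac{\partial z_n}{\partial y_n}\Big).
\]

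\textbf{Differentiation.} Neither $s_n$ nor $\bm{k}_{n-1}(\bm{x}_n)$ nor $\bm{\Omega}_{n-1}$ depends on $\bm{\theta}$; they involve only the inputs, the kernel and $\sigma$. So $\gradz \ell_n = \frac{e_n}{s_n}\gradz e_n - \gradz\log(\partial z_n/\partial y_n)$. Next, $m_{n-1}(\bm{x}_n) = \bm{k}_{n-1}(\bm{x}_n)^\top \bm{\Omega}_{n-1}\bm{z}_{n-1}$ depends on $\bm{\theta}$ only through $\bm{z}_{n-1}$. This gives
\[
\gradz e_n = \dot{\bm{z}}_n - \bm{J}_{n-1}\bm{k}_{n-1}(\bm{x}_n), \qquad \bm{J}_{n-1} := \gradz\bm{\alpha}_{n-1} = \dot{\bm{Z}}_{n-1}\bm{\Omega}_{n-1}\in\mathbb{R}^{r\times(n-1)},
\]
where $\dot{\bm{Z}}_{n-1} = [\dot{\bm{z}}_1,\dots,\dot{\bm{z}}_{n-1}]$ and $\bm{J}_{n-1}$ is the Jacobian of $\bm{\alpha}_{n-1}$ written in transposed (row) layout. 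Setting $\bm{B}_{n-1} := \bm{J}_{n-1}$ turns $\gradz e_n$ into $\bm{b}_n$, and \eqref{eq:gradient_inll_rule} follows.

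\textbf{Recursion for $\bm{B}_n$.} It remains to show that $\bm{B}_n = \dot{\bm{Z}}_n\bm{\Omega}_n$ obeys the stated recursion. I would take the recursive update for $\bm{\alpha}_n$ from Sec.~\ref{sssec:recursive}, with $y$ replaced by $z$, and differentiate both sides with respect to $\bm{\theta}$. The vector $[\bm{\omega}_n^\top,-1]$ and the scalar $s_n$ are $\bm{\theta}$-independent, and $\gradz(z_n - m_{n-1}(\bm{x}_n)) = \bm{b}_n$ by the previous step. Differentiating therefore yields exactly
\[
\bm{B}_n = [\bm{B}_{n-1},\bm{0}_r] - \frac{\bm{b}_n}{s_n}[\bm{\omega}_n^\top, -1].
\]
Alternatively, one can multiply the $\bm{\Omega}_n$ recursion \eqref{seq:recursive_Omega} on the left by $[\dot{\bm{Z}}_{n-1},\dot{\bm{z}}_n]$ and simplify using $\dot{\bm{Z}}_{n-1}\bm{\omega}_n = \bm{B}_{n-1}\bm{k}_{n-1}(\bm{x}_n)$.

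\textbf{Main obstacle.} The delicate step is bookkeeping rather than analysis. One must interpret $\bm{B}_n$ as the transposed Jacobian of $\bm{\alpha}_n$ and keep the $r\times n$ layout and sign conventions consistent. One must also state clearly that $\bm{B}_n$ is evaluated at a fixed $\bm{\theta}$; in the online algorithm, where $\bm{\theta}$ drifts, the recursion is exact only for a fixed $\bm{\theta}$. Finally, the factorization of the Jacobian term must be justified, which uses that the warping acts elementwise.
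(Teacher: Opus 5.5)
Your proposal is correct, but it takes a different route from the paper.

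\textbf{The paper's route.} The paper never writes $\ell_n$ explicitly. It differentiates the \emph{joint} NLL and obtains $\gradz L_n = \dot{\bm{Z}}_n\bm{\Omega}_n\bm{z}_n - \sum_{i=1}^n \gradz\log(\partial z_i/\partial y_i)$, since the log-determinant term does not depend on $\bm{\theta}$. It then substitutes the rank-one recursion for $\bm{\Omega}_n$ into $\dot{\bm{Z}}_n\bm{\Omega}_n\bm{z}_n$, which splits it as $\dot{\bm{Z}}_{n-1}\bm{\Omega}_{n-1}\bm{z}_{n-1} + \frac{e_n}{s_n}\bm{b}_n$. The remainder is recognized as $\gradz L_{n-1}$, and $\gradz\ell_n = \gradz L_n - \gradz L_{n-1}$ is read off from $L_n = L_{n-1}+\ell_n$. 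The $\bm{B}_n$ update comes from left-multiplying the $\bm{\Omega}_n$ recursion by $[\dot{\bm{Z}}_{n-1},\dot{\bm{z}}_n]$, which is your stated alternative.

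\textbf{Your route.} You write $\ell_n$ in closed form as the negative log of the latent predictive density $\mathcal{N}(m_{n-1}(\bm{x}_n), s_n)$, times the single Jacobian factor. You then differentiate $e_n^2/(2s_n)$ directly. You identify $\bm{B}_{n-1}$ as the sensitivity $\gradz\bm{\alpha}_{n-1}$, so its recursion follows from differentiating the $\bm{\alpha}_n$ recursion.

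\textbf{What each buys.} Your approach gives:
\begin{itemize}
\item an explicit formula for $\ell_n$ itself, which is useful for tracking the online NLL;
\item a clean interpretation of $e_n$ as the latent innovation and of $\bm{b}_n=\gradz e_n$ as its sensitivity;
\item no need for the telescoping identification of $\gradz L_{n-1}$.
\end{itemize}
The price is that you must identify the latent predictive density via the Gaussian conditioning (Schur complement) formula. The paper's argument uses only the already-stated $\bm{\Omega}_n$ update as its key ingredient. It is pure linear algebra on the joint density, and the same manipulation also yields the $\bm{B}_n$ recursion, without ever writing down the predictive distribution of $z_n$.

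Your caveat that the recursion is exact only for a fixed $\bm{\theta}$ is correct. The paper leaves this implicit, although its online algorithm combines $\dot{\bm{z}}_n$ evaluated at $\bm{\theta}_{n-1}$ with a $\bm{B}_{n-1}$ accumulated under earlier iterates.
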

\begin{proof}
See the supplementary material \ref{sup:proof}.
\end{proof}

As a result, the gradient of the instantaneous NLL $\nabla_{\bm{\theta}}\ell_n(\bm{\theta})$ can be updated recursively as each new observation $y_n$ and associated input location $\bm{x}_n$ become available by maintaining $\bm{\alpha}_n$, $\bm{\Omega}_n$ (as in Sec. \ref{sssec:recursive}), and $\bm{B}_n$ as state variables.

\section{Proposed online method}
\label{sec:method}
\vspace{-4pt}

Algorithm \ref{alg} outlines the proposed online method for warped GPs, which jointly updates the latent GP moments and optimizes the warping parameters.
It integrates the recursive GP updates from Sec. \ref{sssec:recursive}, the recursive gradient computation of the instantaneous NLL from Sec. \ref{sec:recursive_gradient_computation}, and the ALD sparsification technique introduced in Sec. \ref{sssec:sparsification}. 
As a result, the update rule \eqref{eq:gradient_inll_rule} yields an approximate gradient.

The per-step computational and memory complexity is $\mathcal{O}(D_n^2 + r D_n)$.
We refer the reader to the supplementary material \ref{sup:ald_sparsification} and \ref{sup:sparse-recursive_gp_updates} for further details.

\vspace{-8pt}
\section{Revisiting the 1D regression task}
\label{sec:experiment}
\vspace{-4pt}

To evaluate our proposed method, we adapt the simple 1D regression task introduced in \cite[Sec. 4]{snelson2003warped} to an online setting.
This task is specifically designed to generate non-Gaussian observations with sharp transitions, causing standard GPs to fail; therefore, relying entirely on the modeling capabilities of the warped GP.

\textbf{Experimental setup}. 
The input domain is the real subset $\mathcal{X} = [-\pi,\pi] \subset \mathbb{R}$. 
The observational data are generated from a sinusoidal signal corrupted by white Gaussian noise followed by a cubic root transformation. 
That is, every $n$th observation is generated as $y_n = (\sin(x_n) + \varepsilon_n)^{\frac{1}{3}}$ with $\varepsilon_n \sim \mathcal{N}(0,\frac{1}{3^2})$.
Then, we generate $1000$ realizations of $101$ uniformly spaced observations each.
That is, a total of $101000$ observations, input location pairs that are fed sequentially to the warped GP model.

\textbf{Model configuration}.
The choice of warping transformation follows that of the revisited task.
That is, $g(y;\bm{\theta}) = y + \sum^t_{i=1} \theta_{1,i} \tanh ( \theta_{2,i} ( y + \theta_{3,i}) )$,  where $\bm{\theta} = [\bm{\theta}_1^\top,\bm{\theta}_2^\top,\bm{\theta}_3^\top]^\top\in\Theta$, and $\Theta = \{ \bm{\theta} \in \mathbb{R}^{3t} : \bm{\theta}_1,\bm{\theta}_2 \succeq \bm{0}_{t}, \bm{1}_t^\top\bm{\theta}_1 > 0, \text{ and } \bm{1}_t^\top\bm{\theta}_2 > 0 \}$, to ensure monotonicity.
We use $t=10$ terms, which leads to a total of $r=30$ warping parameters.
Similarly, the choice of the covariance function remains the same, i.e., $\kappa(x,x') = k_a \exp( - \frac{1}{2k_w^2}(x-x')^2 )$, with $k_a = 2$, and $k_w = 2\cdot\frac{2\pi}{101} \approx 0.124$. 
Lastly, the noise standard deviation of the latent observation model is set to $\sigma = 3$.

\textbf{Implementation details}.
We use the Adam\footnote{Pytorch \cite{paszke2019pytorch} default configuration.} optimizer \cite{kingma2014adam}, followed by a projection onto the feasible set $\Theta$.
We set the ALD threshold to $\nu = 0.1$.

\textbf{Results}.
The results of the experiment are summarized in Fig. \ref{fig:experiment}.
Fig. \ref{subfig:process} shows that the predictions of the warped GP closely align with the true generating distribution.
On the other hand, Fig. \ref{subfig:warping} shows that the learned warped transformation closely resembles the true generating cubic transformations in the region of interest where most observations $y$ lie, i.e., $-1 \lesssim y \lesssim 1$. 
It also shows that its inversion is numerically stable.
The (ALD sparsification) dictionary ends up with $68$ atoms.
Finally, the warped GP achieves an empirical NLL of $-0.28$ nats per observation.
This is a $0.76$ nat improvement over a standard GP under an identical model configuration except for a well-calibrated noise variance.

\vspace{-8pt}
\section{Conclusion and future work}
\label{sec:conclusion}
\vspace{-4pt}

We presented a recursive gradient evaluation for the instantaneous NLL of warped GPs, requiring only one extra state variable over standard recursive implementations. 
Building on this result, we introduced an online algorithm for jointly updating latent GP moments and optimizing warping parameters, demonstrating its performance on a warped GP benchmark. 
Future work includes online learning of the kernel parameters (or directly a suitable kernel \cite{ruiz2023online}) and the observation model noise variance; evaluations on broader benchmarks are underway.

\small
\bibliographystyle{IEEEbib}
\bibliography{references.bib}

@incollection{rasmussen2003gaussian,
  title={Gaussian processes in machine learning},
  author={Rasmussen, Carl Edward},
  booktitle={Summer school on machine learning},
  pages={63--71},
  year={2003},
  publisher={Springer}
}

@book{casella2024statistical,
  title={Statistical inference},
  author={Casella, George and Berger, Roger},
  year={2024},
  publisher={CRC press}
}

@article{snelson2003warped,
  title={Warped gaussian processes},
  author={Snelson, Edward and Ghahramani, Zoubin and Rasmussen, Carl},
  journal={Advances in neural information processing systems},
  volume={16},
  year={2003}
}

@book{horn2012matrix,
  title={Matrix analysis},
  author={Horn, Roger A and Johnson, Charles R},
  year={2012},
  publisher={Cambridge university press}
}

@article{diggle1998model,
  title={Model-based geostatistics},
  author={Diggle, Peter J and Tawn, Jonathan A and Moyeed, Rana A},
  journal={Journal of the Royal Statistical Society Series C: Applied Statistics},
  volume={47},
  number={3},
  pages={299--350},
  year={1998},
  publisher={Oxford University Press}
}

@techreport{dewey2026deep,
  title={Deep-AeroGP: deep kernel learning for projecting the regional climate response to anthropogenic aerosol emission changes},
  author={Dewey, Maura and Wilcox, Laura and Samset, Bj{\o}rn and Ekman, Annica},
  year={2026},
  institution={Copernicus Meetings}
}

@inproceedings{deisenroth2011pilco,
  title={PILCO: A model-based and data-efficient approach to policy search},
  author={Deisenroth, Marc and Rasmussen, Carl E},
  booktitle={Proceedings of the 28th International Conference on machine learning (ICML-11)},
  pages={465--472},
  year={2011}
}

@article{swiler2020survey,
  title={A survey of constrained Gaussian process regression: Approaches and implementation challenges},
  author={Swiler, Laura P and Gulian, Mamikon and Frankel, Ari L and Safta, Cosmin and Jakeman, John D},
  journal={Journal of Machine Learning for Modeling and Computing},
  volume={1},
  number={2},
  year={2020},
  publisher={Begel House Inc.}
}

@inproceedings{ruiz2025doubly,
  title={Doubly Truncated Mode Kriging},
  author={Ruiz-Moreno, Emilio and Beferull-Lozano, Baltasar},
  booktitle={2025 IEEE Statistical Signal Processing Workshop (SSP)},
  pages={306--310},
  year={2025},
  organization={IEEE}
}

@article{engel2004kernel,
  title={The kernel recursive least-squares algorithm},
  author={Engel, Yaakov and Mannor, Shie and Meir, Ron},
  journal={IEEE Transactions on signal processing},
  volume={52},
  number={8},
  pages={2275--2285},
  year={2004},
  publisher={IEEE}
}

@book{engel2005algorithms,
  title={Algorithms and representations for reinforcement learning},
  author={Engel, Yaakov},
  year={2005},
  publisher={Hebrew University of Jerusalem Jerusalem}
}

@book{ross2014first,
  title={A first course in probability},
  author={Ross, Sheldon M and Ross, Sheldon M and Ross, Sheldon M and Ross, Sheldon M and Math{\'e}maticien, Etats-Unis},
  volume={8},
  year={2014},
  publisher={Pearson London}
}

@book{beck2017first,
  title={First-order methods in optimization},
  author={Beck, Amir},
  year={2017},
  publisher={SIAM}
}

@article{wang2012breaking,
  title={Breaking the curse of kernelization: Budgeted stochastic gradient descent for large-scale svm training},
  author={Wang, Zhuang and Crammer, Koby and Vucetic, Slobodan},
  journal={The Journal of Machine Learning Research},
  volume={13},
  number={1},
  pages={3103--3131},
  year={2012},
  publisher={JMLR. org}
}

@article{calandriello2017efficient,
  title={Efficient second-order online kernel learning with adaptive embedding},
  author={Calandriello, Daniele and Lazaric, Alessandro and Valko, Michal},
  journal={Advances in Neural Information Processing Systems},
  volume={30},
  year={2017}
}

@article{rahimi2007random,
  title={Random features for large-scale kernel machines},
  author={Rahimi, Ali and Recht, Benjamin},
  journal={Advances in neural information processing systems},
  volume={20},
  year={2007}
}

@article{williams2000using,
  title={Using the Nystr{\"o}m method to speed up kernel machines},
  author={Williams, Christopher and Seeger, Matthias},
  journal={Advances in neural information processing systems},
  volume={13},
  year={2000}
}

@article{money2023sparse,
  title={Sparse online learning with kernels using random features for estimating nonlinear dynamic graphs},
  author={Money, Rohan T and Krishnan, Joshin P and Beferull-Lozano, Baltasar},
  journal={IEEE Transactions on Signal Processing},
  volume={71},
  pages={2027--2042},
  year={2023},
  publisher={IEEE}
}

@incollection{slavakis2014online,
  title={Online learning in reproducing kernel Hilbert spaces},
  author={Slavakis, Konstantinos and Bouboulis, Pantelis and Theodoridis, Sergios},
  booktitle={Academic Press Library in Signal Processing},
  volume={1},
  pages={883--987},
  year={2014},
  publisher={Elsevier}
}

@article{kingma2014adam,
  title={Adam: A method for stochastic optimization},
  author={Kingma, Diederik P and Ba, Jimmy},
  journal={arXiv preprint arXiv:1412.6980},
  year={2014}
}

@article{paszke2019pytorch,
  title={Pytorch: An imperative style, high-performance deep learning library},
  author={Paszke, Adam and Gross, Sam and Massa, Francisco and Lerer, Adam and Bradbury, James and Chanan, Gregory and Killeen, Trevor and Lin, Zeming and Gimelshein, Natalia and Antiga, Luca and others},
  journal={Advances in neural information processing systems},
  volume={32},
  year={2019}
}

@article{ruiz2023online,
  title={An online multiple kernel parallelizable learning scheme},
  author={Ruiz-Moreno, Emilio and Beferull-Lozano, Baltasar},
  journal={IEEE Signal Processing Letters},
  volume={31},
  pages={121--125},
  year={2023},
  publisher={IEEE}
}

@article{kou2013sparse,
  title={Sparse online warped Gaussian process for wind power probabilistic forecasting},
  author={Kou, Peng and Gao, Feng and Guan, Xiaohong},
  journal={Applied energy},
  volume={108},
  pages={410--428},
  year={2013},
  publisher={Elsevier}
}

@inproceedings{bui2016deep,
  title={Deep Gaussian processes for regression using approximate expectation propagation},
  author={Bui, Thang and Hern{\'a}ndez-Lobato, Daniel and Hernandez-Lobato, Jose and Li, Yingzhen and Turner, Richard},
  booktitle={International conference on machine learning},
  pages={1472--1481},
  year={2016},
  organization={PMLR}
}

\newpage
\onecolumn
\begin{center}
\Huge
    Supplementary material for ``Online Gradient Computation for Warping Gaussian Process Transformations''
\end{center}
\setcounter{equation}{0}
\setcounter{section}{0}
\renewcommand{\thesection}{S\arabic{section}} 
\renewcommand{\theequation}{s\arabic{equation}}
\titleformat*{\section}{\raggedright\Large\bfseries}
\medskip

\section{Proof of Theorem \ref{thm:recursive}}
\label{sup:proof}

The NLL in \eqref{eq:nll} corresponds to
\begin{subequations}
\begin{align}
    L_n(\bm{\theta}) &= -\log p(\bm{y}_n | \bm{\theta}) \\
    &= -\log q(\bm{g}(\bm{y}_n ; \bm{\theta})) - \log \prod^n_{i=1} \frac{\partial g(y_i;\bm{\theta})}{\partial y_i} \\
    &= \frac{1}{2} (2\pi)^n \left| \bm{\Omega}_n^{-1} \right| + \frac{1}{2} \bm{g}(\bm{y}_n ; \bm{\theta})^\top \, \bm{\Omega}_n \, \bm{g}(\bm{y}_n ; \bm{\theta}) - \sum^n_{i=1} \log \left( \frac{\partial g(y_i;\bm{\theta})}{\partial y_i} \right) .
\end{align}
\end{subequations}
Its gradient with respect to the warping parameters $\bm{\theta}$ is thus,
\begin{subequations}
\begin{align}
    \gradz L_n(\bm{\theta}) &= 
    \cancel{\gradz\frac{1}{2}(2\pi)^n \left| \bm{\Omega}_n^{-1} \right|} + \frac{1}{2}\gradz^\top \bm{g}(\bm{y}_n ; \bm{\theta}) \, \bm{\Omega}_n \, \bm{g}(\bm{y}_n ; \bm{\theta}) - \sum^n_{i=1} \gradz \log \left( \frac{\partial g(y_i;\bm{\theta})}{\partial y_i} \right) \\
    &= \dot{\bm{Z}}_n \bm{\Omega}_n \bm{z}_n - \sum^n_{i=1} \gradz \log \left( \frac{\partial z_i}{\partial y_i} \right) , \label{seq:grad_nll}
\end{align}
\end{subequations}
where 
\begin{subequations}
\begin{align}
    \bm{z}_n &= \bm{g}(\bm{y}_n;\bm{\theta}) = \begin{bmatrix} z_1 \\ \vdots \\ z_n\end{bmatrix} \in \mathbb{R}^n ,\\
    \dot{\bm{z}}_i &= \gradz g(y_i ; \bm{\theta}) \in \mathbb{R}^r, \text{ for all } i\in\{1,2,\dots,n\}, \text{ and } \\
    \dot{\bm{Z}}_n &= \gradz^\top \bm{g}(\bm{y}_n;\bm{\theta}) = \begin{bmatrix} \dot{\bm{z}}_1 & \dot{\bm{z}}_2 & \dots & \dot{\bm{z}}_n \end{bmatrix} = \begin{bmatrix} \dot{\bm{Z}}_{n-1} & \dot{\bm{z}}_n \end{bmatrix} \in \mathbb{R}^{r\times n} .
\end{align}
\end{subequations}

The first term in \eqref{seq:grad_nll} can be expanded by using the recursive representation of $\bm{\Omega}_n$ described in \eqref{seq:recursive_Omega} as
\begin{subequations}
\begin{align}
    \dot{\bm{Z}}_n \bm{\Omega}_n \bm{z}_n &= 
    \begin{bmatrix} \dot{\bm{Z}}_{n-1} & \dot{\bm{z}}_n \end{bmatrix}
    \left(
    \begin{bmatrix} \bm{\Omega}_{n-1} & \bm{0}_{n-1} \\ \bm{0}_{n-1}^\top & 0 \end{bmatrix}
    + \frac{1}{s_n}
    \begin{bmatrix} \bm{\omega}_n \\ -1 \end{bmatrix}
    \begin{bmatrix} \bm{\omega}_n^\top & -1 \end{bmatrix}
    \right)
    \begin{bmatrix} \bm{z}_{n-1} \\ z_n \end{bmatrix} \\
    &= \dot{\bm{Z}}_{n-1} \bm{\Omega}_{n-1} \bm{z}_{n-1} + \frac{1}{s_n} \left( \dot{\bm{Z}}_{n-1}\bm{\omega}_n - \dot{\bm{z}}_n \right) \left( \bm{\omega}_n^\top\bm{z}_{n-1} - z_n \right) \\
    &= \dot{\bm{Z}}_{n-1} \bm{\Omega}_{n-1} \bm{z}_{n-1} + \frac{e_n}{s_n}\bm{b}_n , \label{seq:quadratic_term_grad_nll}
\end{align}
\end{subequations}
where
\begin{subequations}
\begin{align}
    e_n &= z_n - \bm{\omega}_n^\top \bm{z}_{n-1} \in \mathbb{R} , \text{ and } \\
    \bm{b}_n &= \dot{\bm{z}}_n - \dot{\bm{Z}}_{n-1}\bm{\omega}_n \in \mathbb{R}^r . 
\end{align}
\end{subequations}
Note that $e_n$ already admits a recursive update rule since it is expressed in terms of the current latent target $z_n$ and its corresponding location $\bm{x}_n$, and the previous state terms $\bm{\alpha}_{n-1}$ and $\bm{\Omega}_{n-1}$.
Explicitly,
\begin{subequations}
\begin{align}
    e_n &= z_n - \bm{k}_{n-1}(\bm{x}_n)^\top \bm{\Omega}_{n-1} \bm{z}_{n-1} \\
    &= z_n - \bm{k}_{n-1}(\bm{x}_n)^\top \bm{\alpha}_{n-1} \\
    &= z_n - m_{n-1}(\bm{x}_n) .
\end{align}
\end{subequations}
Regarding $\bm{b}_n$, we maintain an additional state term $\bm{B}_n = \dot{\bm{Z}}_n \bm{\Omega}_n \in \mathbb{R}^{r\times n}$ to enable recursive updates.
Specifically,
\begin{subequations}
\begin{align}
    \bm{B}_n &= \begin{bmatrix} \dot{\bm{Z}}_{n-1} & \dot{\bm{z}}_n \end{bmatrix}
    \left( 
    \begin{bmatrix} \bm{\Omega}_{n-1} & \bm{0}_{n-1} \\ \bm{0}_{n-1}^\top & 0 \end{bmatrix}
    + \frac{1}{s_n}
    \begin{bmatrix} \bm{\omega}_n \\ -1 \end{bmatrix}
    \begin{bmatrix} \bm{\omega}_n^\top & -1 \end{bmatrix}
    \right) \\
    &= \begin{bmatrix} \dot{\bm{Z}}_{n-1}\bm{\Omega}_{n-1} & \bm{0}_r \end{bmatrix} + \frac{1}{s_n}
    \left(
    \dot{\bm{Z}}_{n-1}\bm{\omega}_n - \dot{\bm{z}}_n
    \right)
    \begin{bmatrix} \bm{\omega}_n^\top & -1 \end{bmatrix} \\
    &= \begin{bmatrix} \bm{B}_{n-1} & \bm{0}_r \end{bmatrix} - \frac{1}{s_n} 
    \left(
    \dot{\bm{z}}_n - \bm{B}_{n-1}\bm{k}_{n-1}(\bm{x}_n) 
    \right)
    \begin{bmatrix} \bm{\omega}_n^\top & -1 \end{bmatrix} ,
\end{align}
\end{subequations}
and accordingly,
\begin{subequations}
\begin{align}
    \bm{b}_n &= \dot{\bm{z}}_n - \dot{\bm{Z}}_{n-1} \bm{\Omega}_{n-1} \bm{k}_{n-1}(\bm{x}_n) \\
    &= \dot{\bm{z}}_n - \bm{B}_{n-1} \bm{k}_{n-1}(\bm{x}_n) .
\end{align}
\end{subequations}

Substituting \eqref{seq:quadratic_term_grad_nll} in \eqref{seq:grad_nll} we get
\begin{subequations}
\label{eq:recursive_nll_identic}
\begin{align}
    \gradz L_n(\bm{\theta}) &=  \dot{\bm{Z}}_{n-1} \bm{\Omega}_{n-1} \bm{z}_{n-1} + \frac{e_n}{s_n}\bm{b}_n - \sum^n_{i=1} \gradz \log \left( \frac{\partial z_i}{\partial y_i} \right) \\
    &= \dot{\bm{Z}}_{n-1} \bm{\Omega}_{n-1} \bm{z}_{n-1} - \sum^{n-1}_{i=1} \gradz \log \left( \frac{\partial z_i}{\partial y_i} \right) + \frac{e_n}{s_n}\bm{b}_n - \gradz \log \left( \frac{\partial z_n}{\partial y_n} \right) \\
    &= \gradz L_{n-1}(\bm{\theta}) + \frac{e_n}{s_n}\bm{b}_n - \gradz \log \left( \frac{\partial z_n}{\partial y_n} \right) .
\end{align}
\end{subequations}
Finally, by taking the gradient with respect to the warping parameters $\bm{\theta}$ in both sides of \eqref{eq:recursive_nll} and identifying terms with \eqref{eq:recursive_nll_identic} we obtain 
\begin{equation}
    \gradz \ell_n(\bm{\theta}) = \frac{e_n}{s_n}\bm{b}_n - \gradz \log \left( \frac{\partial z_n}{\partial y_n} \right) .
\end{equation}

\section{ALD sparsification}
\label{sup:ald_sparsification}

Consider a feature mapping $\phi:\mathcal{X} \to \mathcal{F}$ such that $\kappa(\bm{x},\bm{x}') = \langle \phi(\bm{x}),\phi(\bm{x}')\rangle_\mathcal{F}$ for all $\bm{x},\bm{x}'\in\mathcal{X}$, an ongoing trajectory of input locations $\bm{x}_1,\bm{x}_2,\dots,\bm{x}_{n-1}$, and a dictionary (sparse set) $\mathcal{D}_{n-1} \subseteq \mathcal{X}$ of representative input locations collected across that trajectory.

Every time a new input location $\bm{x}_n$ is presented, we check whether its feature representation $\phi(\bm{x}_n)$ is approximately linearly dependent on the feature representation of the previously collected input locations $\tilde{\bm{x}}_1, \tilde{\bm{x}}_2,\dots,\tilde{\bm{x}}_{D_{n-1}}$ in $\mathcal{D}_{n-1}$ where $D_n = |\mathcal{D}_n|$.
That is, we check whether the squared distance $\delta_n$ between $\phi(\bm{x}_n)$ and $\text{span}\{ \phi(\tilde{\bm{x}}_1),\phi(\tilde{\bm{x}}_2),\dots,\phi(\tilde{\bm{x}}_{D_{n-1}}) \}$ is less than or equal to a user-defined threshold $\nu \in \mathbb{R}_{\geq 0}$.
If it is, the dictionary remains the same, i.e., $\mathcal{D}_n = \mathcal{D}_{n-1}$.
If not, we update the dictionary to include the new input location, i.e., $\mathcal{D}_n = \mathcal{D}_{n-1}\cup\{\bm{x}_n\}$.

Explicitly, 
\begin{subequations}
\begin{align}
    \delta_n &= \underset{\bm{a}\in\mathbb{R}^{D_{n-1}}}{\text{ min }} 
    \left\Vert 
    \phi(\bm{x}_n) - \sum^{D_{n-1}}_{i=1} a_i \phi(\tilde{\bm{x}}_i) 
    \right\Vert^2_{\mathcal{F}} \\
    &= \underset{\bm{a}\in\mathbb{R}^{D_{n-1}}}{\text{ min }}
    \left\langle
    \phi(\bm{x}_n) - \sum^{D_{n-1}}_{i=1} a_i \phi(\tilde{\bm{x}}_i) , \phi(\bm{x}_n) - \sum^{D_{n-1}}_{j=1} a_j \phi(\tilde{\bm{x}}_j)
    \right\rangle_{\mathcal{F}} \\
    &= \underset{\bm{a}\in\mathbb{R}^{D_{n-1}}}{\text{ min }} 
    \left\langle \phi(\bm{x}_n) , \phi(\bm{x}_n) \right\rangle_{\mathcal{F}}
    -2 \sum^{D_{n-1}}_{i=1} a_i \left\langle \phi(\bm{x}_n) , \phi(\tilde{\bm{x}}_i) \right\rangle_{\mathcal{F}}
    + \sum^{D_{n-1}}_{i,j=1} a_i a_j \left\langle \phi(\tilde{\bm{x}}_i) , \phi(\tilde{\bm{x}}_j) \right\rangle_{\mathcal{F}} \\
    &= \underset{\bm{a}\in\mathbb{R}^{D_{n-1}}}{\text{ min }}
    \kappa(\bm{x}_n,\bm{x}_n) - 2\sum^{D_{n-1}}_{i=1} a_i \kappa(\bm{x}_n,\tilde{\bm{x}}_i) + \sum^{D_{n-1}}_{i,j=1} a_i a_j \kappa(\tilde{\bm{x}}_i,\tilde{\bm{x}}_j) \\
    &= \underset{\bm{a}\in\mathbb{R}^{D_{n-1}}}{\text{ min }} 
    \kappa(\bm{x}_n,\bm{x}_n) - 2\tilde{\bm{k}}_{n-1}(\bm{x}_n)^\top \bm{a} + \bm{a}^\top \tilde{\bm{K}}_{n-1} \bm{a} ,
\end{align}
\end{subequations}
where $\tilde{\bm{K}}_n\in\mathbb{R}^{D_n\times D_n}$ and $\tilde{\bm{k}}_{n}(\bm{x})\in\mathbb{R}^{D_n}$ are constructed as $[ \tilde{\bm{K}}_n ]_{i,j} = \kappa(\tilde{\bm{x}}_i,\tilde{\bm{x}}_j)$ and $[ \tilde{\bm{k}}_n(\bm{x}) ]_i = \kappa(\tilde{\bm{x}}_i,\bm{x})$ for all $\tilde{\bm{x}}_i,\tilde{\bm{x}}_j\in\mathcal{D}_n$ and $\bm{x}\in\mathcal{X}$.
Thus, finding $\delta_n$ consists of solving a convex quadratic optimization problem with a closed-form solution
\begin{subequations}
\label{eq:ald_solution}
\begin{align}
    \delta_n &= \kappa(\bm{x}_n,\bm{x}_n) - \tilde{\bm{k}}_{n-1}(\bm{x}_n)^\top\hat{\bm{a}}_n , \text{ where } \\
    \hat{\bm{a}}_n &= \tilde{\bm{K}}^{-1}_{n-1} \tilde{\bm{k}}_{n-1}(\bm{x_n}) . \label{seq:ald_a_hat}
\end{align}
\end{subequations}

From here, if $\bm{x}_n$ is included in $\mathcal{D}_n$ (meaning that $\delta_n > \nu$), we set the approximation coefficients as $\bm{a}_n = [0 , \dots , 0 , 1]^\top \in \{0\}^{D_n - 1} \times \{1\} \subset \mathbb{R}^{D_n}$ since $\phi(\bm{x}_n)$ can be exactly represented by itself.
If not (meaning that $\delta_n \leq \nu$), we set $\bm{a}_n = \hat{\bm{a}}_n$ with $\hat{\bm{a}}_n$ as in \eqref{seq:ald_a_hat}.

Let us construct the matrix $\bm{\Phi}_n = [\phi(\bm{x}_1), \phi(\bm{x}_2),\dots,\phi(\bm{x}_n)]$ of size $\text{dim}(\mathcal{F})\times n$.
By the sparsification procedure, we know that every $i$th feature $\phi(\bm{x}_i)$ is approximately linearly dependent on the features of the input locations in $\mathcal{D}_i$, i.e., $\phi(\bm{x}_i) \simeq \sum^{D_i}_{j=1} [\bm{a}_i]_j \phi(\tilde{\bm{x}}_j)$.
Based on this, we can approximate $\bm{\Phi}_n \simeq \tilde{\bm{\Phi}}_n \bm{A}_n^\top$ where $\tilde{\bm{\Phi}}_n = [\phi(\tilde{\bm{x}}_1),\phi(\tilde{\bm{x}}_2),\dots,\phi(\tilde{\bm{x}}_{D_n})]$ is a matrix of size $\text{dim}(\mathcal{F})\times D_n$, and $\bm{A}_n$ is a matrix of size $n\times D_n$ for which each $i$th row is constructed by padding zeros to $\bm{a}_i$ until dimension $D_n$.
On the other hand, we know that $\bm{K}_n = \bm{\Phi}_n^\top\bm{\Phi}_n$ by construction.
As a result, we can approximate
\begin{subequations}
\label{eq:tilde_K_matrix}
\begin{align}
    \bm{K}_n &= \bm{\Phi}_n^\top\bm{\Phi}_n \\
    &\simeq \bm{A}_n \tilde{\bm{\Phi}}_n^\top \tilde{\bm{\Phi}}_n \bm{A}_n^\top \\
    &= \bm{A}_n \tilde{\bm{K}}_n  \bm{A}_n^\top .
\end{align}
\end{subequations}
Similarly, 
\begin{subequations}
\label{eq:tilde_k_vector}
\begin{align}
    \bm{k}_n(\bm{x}) &= 
    \begin{bmatrix} \phi(\bm{x}_1)^\top\phi(\bm{x}) & \phi(\bm{x}_2)^\top\phi(\bm{x}) & \cdots & \phi(\bm{x}_n)^\top\phi(\bm{x})\end{bmatrix} = \bm{\Phi}^\top_n \phi(\bm{x}) \\
    &\simeq \left( \tilde{\bm{\Phi}}_n \bm{A}_n^\top \right)^\top \phi(\bm{x}) = \bm{A}_n \tilde{\bm{\Phi}}_n^\top \phi(\bm{x}) \\
    &= \bm{A}_n \tilde{\bm{k}}_n(\bm{x}) .
\end{align}
\end{subequations}

By extension, plugging \eqref{eq:tilde_K_matrix} and \eqref{eq:tilde_k_vector} into \eqref{eq:GP_moments} yields the following approximation of the posterior moments
\begin{subequations}
\label{eq:approximated_GP_moments}
\begin{align}
    m_n(\bm{x}) &\simeq \tilde{m}_n({\bm{x}}) = \tilde{\bm{k}}_n(\bm{x})^\top \tilde{\bm{\alpha}}_n , \text{ and } \\
    v_n(\bm{x}) &\simeq \tilde{v}_n(\bm{x}) = \kappa(\bm{x},\bm{x}) - \tilde{\bm{k}}_n(\bm{x})^\top \tilde{\bm{C}}_n \tilde{\bm{k}}_n(\bm{x}) ,
\end{align}
\end{subequations}
where
\begin{subequations}
\begin{align}
    \tilde{\bm{\alpha}}_n &= \bm{A}_n^\top \tilde{\bm{\Omega}}_n \bm{y}_n \in \mathbb{R}^{D_n}, \\
    \tilde{\bm{\Omega}}_n &= \left( \bm{A}_n \tilde{\bm{K}}_n  \bm{A}_n^\top + \bm{\Sigma}_n \right)^{-1} \in \mathbb{R}^{n\times n} , \text{ and } \\
    \tilde{\bm{C}}_n &= \bm{A}^\top_n \tilde{\bm{\Omega}}_n \bm{A}_n \in \mathbb{R}^{D_n \times D_n} .
\end{align}
\end{subequations}

\section{ALD sparse and recursive Gaussian process updates}
\label{sup:sparse-recursive_gp_updates}

The goal is updating the approximated posterior moments in \eqref{eq:approximated_GP_moments} from the current observation $y_n$, input location $\bm{x}_n$, and the previous state variables $\tilde{\bm{\alpha}}_{n-1}$ and $\tilde{\bm{C}}_{n-1}$.
Recall from Sec. \ref{sup:ald_sparsification} that every $n$th input location $\bm{x}_n$ may either be left out of the dictionary, in which case $\mathcal{D}_n = \mathcal{D}_{n-1}$, or added to it, in which case $\mathcal{D}_n = \mathcal{D}_{n-1}\cup\{\bm{x}_n\}$.
Depending on the case, the recursive update will vary accordingly.

\subsection{Case \texorpdfstring{$\mathcal{D}_n = \mathcal{D}_{n-1}$ (meaning that $\delta_n \leq \nu$)}{discarded}}

Since the dictionary remains unchanged 
\begin{subequations}
\begin{align}
    \tilde{\bm{K}}_n &= \tilde{\bm{K}}_{n-1} \in \mathbb{R}^{D_n \times D_n} , \\
    \bm{A}_n &= \begin{bmatrix} \bm{A}_{n-1} \\ \bm{a}_n^\top \end{bmatrix} \in \mathbb{R}^{n\times D_n} , \text{ and } \\
    \bm{a}_n &=  \tilde{\bm{K}}^{-1}_{n-1}\tilde{\bm{k}}_{n-1}(\bm{x}_n) .
\end{align}
\end{subequations}

Then,
\begin{subequations}
\begin{align}
    \tilde{\bm{\Omega}}_n^{-1} &= 
    \begin{bmatrix} \bm{A}_{n-1} \\ \bm{a}_n^\top \end{bmatrix} 
    \tilde{\bm{K}}_{n-1} 
    \begin{bmatrix} \bm{A}_{n-1}^\top & \bm{a}_n \end{bmatrix} + 
    \begin{bmatrix} \sigma^2\bm{I}_{n-1} & \bm{0}_{n-1} \\ \bm{0}_{n-1} & \sigma^2\end{bmatrix} \\
    &= 
    \begin{bmatrix}
        \bm{A}_{n-1} \tilde{\bm{K}}_{n-1} \bm{A}_{n-1}^\top & \bm{A}_{n-1}\tilde{\bm{K}}_{n-1} \bm{a}_n \\ \bm{a}_n^\top \tilde{\bm{K}}_{n-1} \bm{A}^\top_{n-1} & \bm{a}_n^\top \tilde{\bm{K}}_{n-1} \bm{a}_n
    \end{bmatrix} +
    \begin{bmatrix} \sigma^2\bm{I}_{n-1} & \bm{0}_{n-1} \\ \bm{0}_{n-1} & \sigma^2 \end{bmatrix} \\
    &= 
    \begin{bmatrix}
        \tilde{\bm{\Omega}}_n^{-1} & \bm{A}_{n-1}\tilde{\bm{K}}_{n-1} \bm{a}_n \\ \left( \bm{A}_{n-1}\tilde{\bm{K}}_{n-1} \bm{a}_n \right)^\top & \bm{a}_n^\top \tilde{\bm{K}}_{n-1} \bm{a}_n + \sigma^2 
    \end{bmatrix} .
\end{align}
\end{subequations}
Notice that,
\begin{subequations}
\begin{align}
    \bm{A}_{n-1} \tilde{\bm{K}}_{n-1} \bm{a}_n &= \bm{A}_{n-1} \tilde{\bm{K}}_{n-1} \tilde{\bm{K}}_{n-1}^{-1} \tilde{\bm{k}}_{n-1}(\bm{x}_n) \\
    &= \bm{A}_{n-1} \tilde{\bm{k}}_{n-1}(\bm{x}_n) , \text{ and } \\
    \bm{a}^\top_n \tilde{\bm{K}}_{n-1} \bm{a}_n &= \tilde{\bm{k}}_{n-1}(\bm{x}_n)^\top \tilde{\bm{K}}_{n-1}^{-1} \tilde{\bm{K}}_{n-1} \bm{a}_n \\
    &= \tilde{\bm{k}}_{n-1}(\bm{x}_n)^\top \bm{a}_n .
\end{align}
\end{subequations}
Therefore,
\begin{equation}
    \tilde{\bm{\Omega}}_n^{-1} = 
    \begin{bmatrix}
        \tilde{\bm{\Omega}}_{n-1}^{-1} & \bm{A}_{n-1} \tilde{\bm{k}}_{n-1}(\bm{x}_n) \\
        \left( \bm{A}_{n-1} \tilde{\bm{k}}_{n-1}(\bm{x}_n) \right)^\top & \tilde{\bm{k}}_{n-1}(\bm{x}_n)^\top\bm{a}_n + \sigma^2
    \end{bmatrix} ,
\end{equation}
and applying the partitioned (symmetric and positive semi-definite) matrix inverse formula \cite[Ch. 0.7]{horn2012matrix} we get 
\begin{equation}
    \tilde{\bm{\Omega}}_n =
    \begin{bmatrix}
        \tilde{\bm{\Omega}}_{n-1} & \bm{0}_{n-1} \\ \bm{0}_{n-1}^\top & 0
    \end{bmatrix}
    + \frac{1}{\tilde{s}_n}
    \begin{bmatrix} \tilde{\bm{\omega}}_n \\ -1 \end{bmatrix}
    \begin{bmatrix} \tilde{\bm{\omega}}_n^\top & -1 \end{bmatrix} ,
\end{equation}
where
\begin{subequations}
\begin{align}
    \tilde{\bm{\omega}}_n &= \tilde{\bm{\Omega}}_{n-1}\bm{A}_{n-1}\tilde{\bm{k}}_{n-1}(\bm{x}_n) \in \mathbb{R}^{n-1} , \text{ and  } \\
    \tilde{s}_n &=  \tilde{k}_{n-1}(\bm{x}_n)^\top \bm{a}_n + \sigma^2 - \tilde{\bm{\omega}}_n^\top \bm{A}_{n-1} \tilde{\bm{k}}_{n-1}(\bm{x}_n) \in \mathbb{R} .
\end{align}
\end{subequations}

Now,
\begin{subequations}
\begin{align}
    \tilde{\bm{C}}_n &= 
    \begin{bmatrix} \bm{A}_{n-1}^\top & \bm{a}_n \end{bmatrix}
    \left(
    \begin{bmatrix} \tilde{\bm{\Omega}}_{n-1} & \bm{0}_{n-1} \\ \bm{0}_{n-1}^\top & 0 \end{bmatrix} +
    \frac{1}{\tilde{s}_n}
    \begin{bmatrix} \tilde{\bm{\omega}}_n \\ -1 \end{bmatrix}
    \begin{bmatrix} \tilde{\bm{\omega}}_n^\top & -1 \end{bmatrix}
    \right)
    \begin{bmatrix} \bm{A}_{n-1} \\ \bm{a}_n^\top \end{bmatrix} \\
    &= \bm{A}_{n-1}^\top \tilde{\bm{\Omega}}_{n-1} \bm{A}_{n-1} + \frac{1}{\tilde{s}_n}
    \left( \bm{A}_{n-1}^\top \tilde{\bm{\omega}}_n - \bm{a}_n \right)
    \left( \tilde{\bm{\omega}}_n^\top \bm{A}_{n-1} - \bm{a}_n^\top \right) \\
    &= \tilde{\bm{C}}_{n-1} + \frac{1}{\tilde{s}_n} \tilde{\bm{c}}_n \tilde{\bm{c}}_n^\top ,
\end{align}
\end{subequations}
where
\begin{equation}
    \tilde{\bm{c}}_n = \bm{a}_n - \bm{A}_{n-1}^\top \tilde{\bm{w}}_n \in \mathbb{R}^{D_n} .
\end{equation}

In the same way,
\begin{subequations}
\begin{align}
    \tilde{\bm{\alpha}}_n &= 
    \begin{bmatrix} \bm{A}_{n-1}^\top & \bm{a}_n \end{bmatrix}
    \left(
    \begin{bmatrix} \tilde{\bm{\Omega}}_{n-1} & \bm{0}_{n-1} \\ \bm{0}_{n-1}^\top & 0 \end{bmatrix} +
    \frac{1}{\tilde{s}_n}
    \begin{bmatrix} \tilde{\bm{\omega}}_n \\ -1 \end{bmatrix}
    \begin{bmatrix} \tilde{\bm{\omega}}_n^\top & -1 \end{bmatrix}
    \right)
    \begin{bmatrix} \bm{y}_{n-1} \\ y_n \end{bmatrix} \\
    &= \bm{A}_{n-1}^\top \tilde{\bm{\Omega}}_{n-1} \bm{y}_{n-1} + \frac{1}{\tilde{s}_n}
    \left( \bm{A}_{n-1}^\top \tilde{\bm{\omega}}_n - \bm{a}_n \right)
    \left( \tilde{\bm{\omega}}_n^\top \bm{y}_{n-1} - y_n \right) \\
    &= \tilde{\bm{\alpha}}_{n-1} + \frac{\tilde{e}_n}{\tilde{s}_n} \tilde{\bm{c}}_n ,
\end{align}
\end{subequations}
where
\begin{equation}
    \tilde{e}_n = y_n - \tilde{\bm{\omega}}_n^\top \bm{y}_{n-1} \in \mathbb{R}.
\end{equation}

Finally,
\begin{subequations}
\begin{align}
    \tilde{\bm{c}}_n &= \bm{a}_n - \bm{A}_{n-1}^\top \tilde{\bm{\Omega}}_{n-1} \bm{A}_{n-1} \tilde{\bm{k}}_{n-1}(\bm{x}_n) \\
    &= \bm{a}_n - \tilde{\bm{C}}_{n-1} \tilde{\bm{k}}_{n-1}(\bm{x}_n) , \\
    \tilde{e}_n &= y_n - \tilde{\bm{k}}_{n-1}(\bm{x}_n)^\top \bm{A}_{n-1}^\top \tilde{\bm{\Omega}}_{n-1} \bm{y}_{n-1} \\
    &= y_n - \tilde{\bm{k}}_{n-1}(\bm{x}_n)^\top \tilde{\bm{\alpha}}_{n-1} , \text{ and }\\
    \tilde{s}_n &= \tilde{\bm{k}}_{n-1}(\bm{x}_n)^\top \bm{a}_n + \sigma^2 - \tilde{\bm{k}}_{n-1}(\bm{x}_n) \bm{A}_{n-1}^\top \tilde{\bm{\Omega}}_{n-1} \bm{A}_{n-1} \tilde{\bm{k}}_{n-1}(\bm{x}_n) \\
    &= \tilde{\bm{k}}_{n-1}(\bm{x}_n)^\top \bm{a}_n + \sigma^2 - \tilde{\bm{k}}_{n-1}(\bm{x}_n)^\top \tilde{\bm{C}}_{n-1} \tilde{\bm{k}}_{n-1}(\bm{x}_n) .
\end{align}
\end{subequations}

\subsection{Case \texorpdfstring{$\mathcal{D}_n = \mathcal{D}_{n-1}\cup\{\bm{x}_n\}$ (meaning that $\delta_n > \nu$)}{added}}

In this case, 
\begin{subequations}
\begin{align}
    \tilde{\bm{K}}_n &= 
    \begin{bmatrix} \tilde{\bm{K}}_{n-1} & \tilde{\bm{k}}_{n-1}(\bm{x}_n) \\ \tilde{\bm{k}}_{n-1}(\bm{x}_n)^\top & \kappa(\bm{x}_n,\bm{x}_n)  \end{bmatrix} \in \mathbb{R}^{D_n \times D_n} , \label{seq:tilde_K_matrix} \\
    \bm{A}_n &= 
    \begin{bmatrix} 
    \begin{bmatrix} \bm{A}_{n-1} & \bm{0}_{n-1} \end{bmatrix} \\ 
    \bm{a}_n
    \end{bmatrix} \in \mathbb{R}^{n\times D_n} , \text{ and } \\
    \bm{a}_n &= \begin{bmatrix} 0 & \cdots & 0 & 1 \end{bmatrix} \in \{0\}^{D_n - 1} \times \{1\} \subset \mathbb{R}^{D_n} . 
\end{align}
\end{subequations}

Then, 
\begin{subequations}
\begin{align}
    \tilde{\bm{\Omega}}_n^{-1} &= 
    \begin{bmatrix} 
        \begin{bmatrix} \bm{A}_{n-1} & \bm{0}_{n-1} \end{bmatrix} \\ 
        \bm{a}_n
    \end{bmatrix}
    \begin{bmatrix} 
        \tilde{\bm{K}}_{n-1} & \tilde{\bm{k}}_{n-1}(\bm{x}_n) \\ \tilde{\bm{k}}_{n-1}(\bm{x}_n)^\top & \kappa(\bm{x}_n,\bm{x}_n)  
    \end{bmatrix}
    \begin{bmatrix}
        \begin{bmatrix} \bm{A}_{n-1}^\top \\ \bm{0}_{n-1}^\top \end{bmatrix} &
        \bm{a}_n
    \end{bmatrix} + 
    \begin{bmatrix}
        \sigma^2\bm{I}_{n-1} & \bm{0}_{n-1} \\ \bm{0}_{n-1}^\top & \sigma^2
    \end{bmatrix} \\
    &= 
    \begin{bmatrix}
        \bm{A}_{n-1} \tilde{\bm{K}}_{n-1} & \bm{A}_{n-1}\tilde{\bm{k}}_{n-1}(\bm{x}_n) \\
        \tilde{\bm{k}}_{n-1}(\bm{x}_n)^\top & \kappa(\bm{x}_n,\bm{x}_n)
    \end{bmatrix}
    \begin{bmatrix}
        \begin{bmatrix} \bm{A}_{n-1}^\top \\ \bm{0}_{n-1}^\top \end{bmatrix} &
        \bm{a}_n
    \end{bmatrix} + 
    \begin{bmatrix}
        \sigma^2\bm{I}_{n-1} & \bm{0}_{n-1} \\ \bm{0}_{n-1}^\top & \sigma^2
    \end{bmatrix} \\
    &= 
    \begin{bmatrix}
        \bm{A}_{n-1} \tilde{\bm{K}}_{n-1} \bm{A}_{n-1}^\top + \sigma^2\bm{I}_{n-1} & \bm{A}_{n-1} \tilde{\bm{k}}_{n-1}(\bm{x}_n) \\
        \tilde{\bm{k}}_{n-1}(\bm{x}_n)^\top \bm{A}_{n-1}^\top & \kappa(\bm{x}_n,\bm{x}_n) + \sigma^2
    \end{bmatrix} \\
    &= 
    \begin{bmatrix}
        \tilde{\bm{\Omega}}_{n-1}^{-1} & \bm{A}_{n-1} \tilde{\bm{k}}_{n-1}(\bm{x}_n) \\
        \left( \bm{A}_{n-1} \tilde{\bm{k}}_{n-1}(\bm{x}_n) \right)^\top & \kappa(\bm{x}_n,\bm{x}_n) + \sigma^2
    \end{bmatrix} ,
\end{align}
\end{subequations}
and applying the partitioned (symmetric and positive semi-definite) matrix inverse formula \cite[Ch. 0.7]{horn2012matrix} we get 
\begin{equation}
    \tilde{\bm{\Omega}}_n =
    \begin{bmatrix}
        \tilde{\bm{\Omega}}_{n-1} & \bm{0}_{n-1} \\ \bm{0}_{n-1}^\top & 0
    \end{bmatrix}
    + \frac{1}{\tilde{s}_n}
    \begin{bmatrix} \tilde{\bm{\omega}}_n \\ -1 \end{bmatrix}
    \begin{bmatrix} \tilde{\bm{\omega}}_n^\top & -1 \end{bmatrix} ,
\end{equation}
where
\begin{subequations}
\begin{align}
    \tilde{\bm{\omega}}_n &= \tilde{\bm{\Omega}}_{n-1}\bm{A}_{n-1}\tilde{\bm{k}}_{n-1}(\bm{x}_n) \in \mathbb{R}^{n-1} , \text{ and  } \\
    \tilde{s}_n &=  \kappa(\bm{x}_n,\bm{x}_n) + \sigma^2 - \tilde{\bm{\omega}}_n^\top \bm{A}_{n-1} \tilde{\bm{k}}_{n-1}(\bm{x}_n) \in \mathbb{R} .
\end{align}
\end{subequations}

From here, we are able to expand
\begin{subequations}
\begin{align}
    \tilde{\bm{C}}_n &=
    \begin{bmatrix}
        \begin{bmatrix} \bm{A}_{n-1}^\top \\ \bm{0}_{n-1}^\top \end{bmatrix} &
        \bm{a}_n
    \end{bmatrix}
    \left(
    \begin{bmatrix} \tilde{\bm{\Omega}}_{n-1} & \bm{0}_{n-1} \\ \bm{0}_{n-1}^\top & 0 \end{bmatrix} +
    \frac{1}{\tilde{s}_n}
    \begin{bmatrix} \tilde{\bm{\omega}}_n \\ -1 \end{bmatrix}
    \begin{bmatrix} \tilde{\bm{\omega}}_n^\top & -1 \end{bmatrix}
    \right)
    \begin{bmatrix} 
        \begin{bmatrix} \bm{A}_{n-1} & \bm{0}_{n-1} \end{bmatrix} \\ 
        \bm{a}_n
    \end{bmatrix} \\
    &= 
    \begin{bmatrix}
        \bm{A}_{n-1}^\top \\ \bm{0}_{n-1}^\top 
    \end{bmatrix}
    \tilde{\bm{\Omega}}_{n-1}
    \begin{bmatrix}
        \bm{A}_{n-1} & \bm{0}_{n-1} 
    \end{bmatrix} 
    + \frac{1}{\tilde{s}_n} 
    \left(
    \begin{bmatrix} \bm{A}_{n-1}^\top \\ \bm{0}_{n-1}^\top \end{bmatrix}
    \tilde{\bm{\omega}}_n - \bm{a}_n
    \right)
    \left( 
    \tilde{\bm{\omega}}_n^\top 
    \begin{bmatrix} \bm{A}_{n-1} & \bm{0}_{n-1} \end{bmatrix} - \bm{a}_n^\top
    \right) \\
    &= 
    \begin{bmatrix}
        \tilde{\bm{C}}_{n-1} & \bm{0}_{D_n-1} \\ \bm{0}_{D_n-1}^\top & 0
    \end{bmatrix}
    + \frac{1}{\tilde{s}_n} \tilde{\bm{c}}_n \tilde{\bm{c}}_n^\top
\end{align}
\end{subequations}
where
\begin{equation}
    \tilde{\bm{c}}_n = \bm{a}_n - 
    \begin{bmatrix} \bm{A}_{n-1}^\top \\ \bm{0}_{n-1}^\top \end{bmatrix} \tilde{\bm{w}}_n \in \mathbb{R}^{D_n} .
\end{equation}

Similarly,
\begin{subequations}
\begin{align}
    \tilde{\bm{\alpha}}_n &= 
    \begin{bmatrix}
        \begin{bmatrix} \bm{A}_{n-1}^\top \\ \bm{0}_{n-1}^\top \end{bmatrix} &
        \bm{a}_n
    \end{bmatrix}
    \left(
    \begin{bmatrix} \tilde{\bm{\Omega}}_{n-1} & \bm{0}_{n-1} \\ \bm{0}_{n-1}^\top & 0 \end{bmatrix} +
    \frac{1}{\tilde{s}_n}
    \begin{bmatrix} \tilde{\bm{\omega}}_n \\ -1 \end{bmatrix}
    \begin{bmatrix} \tilde{\bm{\omega}}_n^\top & -1 \end{bmatrix}
    \right)
    \begin{bmatrix} \bm{y}_{n-1} \\ y_n \end{bmatrix} \\
    &= 
    \begin{bmatrix} \bm{A}_{n-1}^\top \\ \bm{0}_{n-1}^\top  \end{bmatrix}
    \tilde{\bm{\Omega}}_{n-1} \bm{y}_{n-1} + \frac{1}{\tilde{s}_n}
    \left(
    \begin{bmatrix} \bm{A}_{n-1}^\top \\ \bm{0}_{n-1}^\top  \end{bmatrix} 
    \tilde{\bm{\omega}}_n - \bm{a}_n
    \right)
    \left(
    \tilde{\bm{\omega}}_n^\top \bm{y}_{n-1} - y_n
    \right) \\
    &= 
    \begin{bmatrix} \tilde{\bm{\alpha}}_{n-1} \\ 0 \end{bmatrix} 
    + \frac{\tilde{e}_n}{\tilde{s}_n} \tilde{\bm{c}}_n ,
\end{align}
\end{subequations}
where
\begin{equation}
    \tilde{e}_n = y_n - \tilde{\bm{\omega}}_n^\top \bm{y}_{n-1} \in \mathbb{R} .
\end{equation}

Finally,
\begin{subequations}
\begin{align}
    \tilde{\bm{c}}_n &= \bm{a}_n -
    \begin{bmatrix} \bm{A}_{n-1}^\top \tilde{\bm{\Omega}}_{n-1} \bm{A}_{n-1} \tilde{\bm{k}}_{n-1}(\bm{x}_n) \\ 0 \end{bmatrix} \\
    &= 
    \begin{bmatrix} - \tilde{\bm{C}}_{n-1} \tilde{\bm{k}}_{n-1}(\bm{x}_n) \\ 1 \end{bmatrix} , \\
    \tilde{e}_n &= y_n - \tilde{\bm{k}}_{n-1}(\bm{x}_n)^\top\bm{A}_{n-1}^\top \tilde{\bm{\Omega}}_{n-1} \bm{y}_{n-1} \\
    &= y_n - \tilde{\bm{k}}_{n-1}(\bm{x}_n)^\top \tilde{\bm{\alpha}}_{n-1} , \text{ and } \\
    \tilde{s}_n &= \kappa(\bm{x}_n,\bm{x}_n) + \sigma^2 - \tilde{\bm{k}}_{n-1}(\bm{x}_n)^\top \bm{A}_{n-1}^\top \tilde{\bm{\Omega}}_{n-1} \bm{A}_{n-1} \tilde{\bm{k}}_{n-1}(\bm{x}_n) \\
    &= \kappa(\bm{x}_n,\bm{x}_n) + \sigma^2 - \tilde{\bm{k}}_{n-1}(\bm{x}_n)^\top \tilde{\bm{C}}_{n-1} \tilde{\bm{k}}_{n-1}(\bm{x}_n) .
\end{align}
\end{subequations}

It is worth noting that by applying the partitioned matrix inverse formula \cite[Ch. 0.7]{horn2012matrix} on \eqref{seq:tilde_K_matrix} one can get
\begin{equation}
    \tilde{\bm{K}}_{n-1}^{-1} = \frac{1}{\delta_n} 
    \begin{bmatrix}
        \delta_n\tilde{\bm{K}}_{n-1}^{-1} + \hat{\bm{a}}_n \hat{\bm{a}}_n^\top & - \hat{\bm{a}}_n \\ -\hat{\bm{a}}_n^\top & 1
    \end{bmatrix} ,
\end{equation}
with $\delta_n$ and $\hat{\bm{a}}_n$ as in \eqref{eq:ald_solution}.

\subsection{Comments on the computational and memory complexity}
The per-step computational and memory complexity of Algorithm \ref{alg} is $\mathcal{O}(D_n^2 + rD_n)$, where $D_n$ is the ALD sparsification dictionary size, and $r$ is the number of warping parameters.

The following operations dominate this complexity:
\begin{itemize}
    \item Evaluating the kernel vector $\tilde{\bm{k}}_{n-1}(\bm{x}_n)$ is $\mathcal{O}(D_n)$, since it evaluates the kernel between $\bm{x}_n$ and each of the $D_n$ dictionary elements.
    \item Computing the ALD projection coefficients $\hat{\bm{a}}_n = \tilde{\bm{K}}^{-1}_{n-1}\tilde{\bm{k}}_{n-1}(\bm{x}_n)$ is $\mathcal{O}(D_n^2)$ since $\tilde{\bm{K}}^{-1}_n$ is updated via a rank-1 block inversion.
    \item Evaluating the warping transformation used in Sec. \ref{sec:experiment} $g(y_n;\bm{\theta})$ (and its gradient) is $\mathcal{O}(r)$ due to basic arithmetic and $\tanh$ operations.
    \item Computing the matrix-vector products $\tilde{\bm{B}}_{n-1}\tilde{\bm{k}}_{n-1}(\bm{x}_n)$ and $\tilde{\bm{C}}_{n-1}\tilde{\bm{k}}_{n-1}(\bm{x}_n)$ is $\mathcal{O}(rD_n)$ and $\mathcal{O}(D^2_n)$, respectively, due to the matrix and vector dimensions.
    \item Finally, updating $\tilde{\bm{C}}_n$ and $\tilde{\bm{B}}_n$ is $\mathcal{O}(D_n^2)$ and $\mathcal{O}(rD_n)$, respectively, due to the rank-1 outer products $\tilde{\bm{c}}_n\tilde{\bm{c}}_n^\top$ and $\tilde{\bm{b}}_n\tilde{\bm{c}}^\top_n$ in their update rules.
\end{itemize}

\end{document}